\documentclass[conference]{IEEEtran}
\IEEEoverridecommandlockouts
\usepackage{cite}
\usepackage{amsmath,amssymb,amsfonts, amsthm}
\usepackage{algorithmic}
\usepackage{graphicx}
\usepackage{textcomp}
\usepackage{xcolor}
\usepackage{subcaption}

\graphicspath{{images/}}
\newtheorem{theorem}{Theorem}
\def\BibTeX{{\rm B\kern-.05em{\sc i\kern-.025em b}\kern-.08em
    T\kern-.1667em\lower.7ex\hbox{E}\kern-.125emX}}

\begin{document}

\title{Path-Weighted Integrated Gradients for Interpretable Dementia Classification\\
\thanks{Identify applicable funding agency here. If none, delete this.}
}

\author{\IEEEauthorblockN{ Firuz Kamalov}
\IEEEauthorblockA{\textit{Department of Electrical Engineering} \\
\textit{Canadian University Dubai}\\
Dubai, UAE \\
firuz@cud.ac.ae}
\and
\IEEEauthorblockN{Mohmad Al Falasi}
\IEEEauthorblockA{\textit{Department of Electrical Engineering} \\
\textit{Canadian University Dubai}\\
Dubai, UAE \\
20210002271@students.cud.ac.ae}
\and
\IEEEauthorblockN{Fadi Thabtah}
\IEEEauthorblockA{\textit{Department of Informatics} \\
\textit{Abu Dhabi School of Management}\\
Abu Dhab, UAE \\
f.fayez@adsm.ac.ae}
}

\maketitle

\begin{abstract}
Integrated Gradients (IG) is a widely used attribution method in explainable artificial intelligence (XAI).  In this paper, we introduce Path-Weighted Integrated Gradients (PWIG), a generalization of IG that incorporates a customizable weighting function into the attribution integral. This modification allows for targeted emphasis along different segments of the path between a baseline and the input, enabling improved interpretability, noise mitigation, and the detection of path-dependent feature relevance. We establish its theoretical properties and illustrate its utility through experiments on a dementia classification task using the OASIS-1 MRI dataset.  Attribution maps generated by PWIG highlight clinically meaningful brain regions associated with various stages of dementia, providing users with sharp and stable explanations. The results suggest that PWIG offers a flexible and theoretically grounded approach for enhancing attribution quality in complex predictive models.
\end{abstract}

\begin{IEEEkeywords}
integrated gradients, feature attribution, dementia detection, explanable AI
\end{IEEEkeywords}

\section{Introduction}

Deep neural networks (DNNs) have achieved state-of-the-art performance across a wide range of domains, including computer vision, natural language processing, and healthcare. Despite their success, the opaque nature of these models often prevents their adoption in safety-critical and regulated environments, where interpretability is paramount. Explainable Artificial Intelligence (XAI) aims to bridge this gap by developing techniques that elucidate the internal decision-making processes of complex models. Feature attribution methods have garnered particular attention due to their ability to quantify the influence of individual input features on model predictions.

Gradient-based attribution techniques—such as DeepLIFT, Layer-wise Relevance Propagation (LRP), Deconvolutional Networks, and Guided Backpropagation—have emerged as prominent tools in the XAI landscape \cite{Sadeghi}. However, many of the approaches fail to meet key theoretical essentials such as sensitivity and implementation invariance.
To address these issues, Sundararajan et al.~\cite{sundararajan2017axiomatic} proposed the {Integrated Gradients} (IG) method, which calculates feature importance by integrating model gradients along a linear path from a user-defined baseline to the input of interest. The IG method satisfies both sensitivity and implementation invariance as well as more stringent the completeness property. Due to its theoretical grounding IG has been widely adopted for tasks involving model auditing, feature selection, and scientific discovery.

Despite its strengths, the standard IG framework has limitations. The reliance on a fixed, straight-line path and uniform integration can obscure subtle variations in model behavior along the path from baseline to input. Moreover, certain segments of this path may traverse regions of input space that are poorly represented in the training data, leading to noisy or uninformative gradients. These limitations motivate the need for more flexible and expressive attribution techniques.

In this paper, we propose \emph{Path-Weighted Integrated Gradients} (PWIG), a novel extension of the IG framework that introduces a weight function $g(\alpha)$ into the path integral. This function modulates the contribution of gradients computed at different points along the interpolation path, thereby offering enhanced control over the attribution process. PWIG provides a more fine-grained and context-aware interpretation of model predictions and enables several practical benefits:

First, it supports \emph{focused analysis} by allowing emphasis on specific regions of the interpolation path. For instance, emphasizing gradients near the input allows to capture final-stage contributions or the gradients near the baseline to investigate early feature activations. Second, PWIG improves \emph{noise reduction} by down-weighting gradients in regions where the model exhibits unstable or erratic behavior. Third, it enables the discovery of \emph{path dependencies} by highlighting how feature importance evolves along the transition from baseline to input.

While the inclusion of a weighting function introduces greater flexibility, it also entails trade-offs—most notably, the violation of the completeness axiom unless $g(\alpha) = 1$. Nevertheless, PWIG retains other desirable properties, including sensitivity, implementation invariance, and symmetry preservation. In this study, we formalize the PWIG framework, analyze its theoretical characteristics, and demonstrate its effectiveness through empirical evaluations on a dementia dataset.

In this study, we evaluate the practical utility of PWIG in the context of a dementia classification task using the Open Access Series of Imaging Studies (OASIS-1) dataset \cite{oasis2007}.  We train a convolutional neural network (CNN) to predict diagnostic categories from the MRI volumes and then use PWIG to generate voxel-level attributions. By varying the weighting function $g(\alpha)$, we demonstrate how PWIG can reveal both early-stage biomarkers near the baseline and dominant disease-related patterns near the input. Our empirical findings indicate that PWIG provides sharp, stable, and clinically meaningful explanations.

The remainder of this paper is structured as follows. In Section~\ref{sec:relatedwork}, we review related work in explainable AI and attribution methods. Section~\ref{sec:method} formally introduces the PWIG framework, detailing its mathematical formulation, theoretical properties, and practical considerations. Section~\ref{sec:experiments} presents our empirical analysis, including the dataset description, model architecture, attribution results, and interpretability evaluation. Finally, Section~\ref{sec:conclusion} concludes the paper and outlines directions for future research.

\section{Related Work}\label{sec:relatedwork}

The increasing use of deep neural networks (DNNs) in high-stakes domains such as healthcare and finance has reinforced the importance of explainable artificial intelligence (XAI). Among the many approaches to XAI, feature attribution methods are a core category that aim to identify the contributions of individual input features to a model's prediction. Integrated Gradients (IG), introduced by Sundararajan et al.\cite{sundararajan2017axiomatic}, is one of the most widely adopted attribution methods. It computes feature importance scores by integrating the gradients of the model's output along a straight-line path from a baseline input to the actual input, thus mitigating issues such as gradient saturation.

IG adheres to several desirable axioms for attribution methods, including {completeness}, {sensitivity}, and {implementation invariance}. These properties ensure that IG provides consistent and theoretically sound explanations across equivalent models. However, recent work\cite{costanza2025riemannian} has suggested that these axioms can be satisfied by a broader family of methods, opening the door for alternative approaches.

Despite its theoretical appeal, IG suffers from several limitations. First, the method can produce noisy attributions due to erratic gradients, especially when interpolation steps deviate from the data manifold\cite{zaher2024manifold, kapishnikov2021guided}. Second, IG is highly sensitive to the choice of baseline. Selecting an inappropriate or domain-incongruent baseline (e.g., zero in medical imaging) can lead to misleading interpretations\cite{erion2021improving}. Finally, IG's reliance on many gradient evaluations makes it computationally expensive, particularly for high-dimensional data or complex architectures\cite{jha2024integrated}.

Recent research has proposed numerous improvements to IG. Guided Integrated Gradients (GIG)~\cite{kapishnikov2021guided} and Blur IG~\cite{xu2020attribution} reduce noise by modifying the integration path or leveraging scale-space theory. Integrated Decision Gradients (IDG)~\cite{jha2024integrated} restricts integration to decision-critical regions, improving both interpretability and efficiency. Methods such as RiemannOpt~\cite{swain2024riemann} and IG2\cite{zhuo2023ig2} further refine the approximation of the path integral by optimizing sampling strategies.

Baseline selection has also been a central topic. Expected Gradients (EG)~\cite{erion2021improving} treats the baseline as a distribution, improving robustness and efficiency. Tangentially Aligned IG~\cite{simpson2025tangentially} and Riemannian IG\cite{costanza2025riemannian} constrain paths to align with the data manifold, enhancing the semantic quality of attributions.

IG and its variants have been applied to a wide range of domains. In computer vision, they are used for image classification and medical diagnostics\cite{yadav2025exploring}, where visual explanation quality is crucial. In natural language processing, Sequential IG\cite{enguehard2023sequential} and Integrated Directional Gradients\cite{sikdar2021integrated} have been proposed to better handle token interactions and syntax. Time series and graph learning settings have also adopted IG-based techniques\cite{wang2024improving, simpson2025tangentially}, often with domain-specific enhancements.

Compared to other attribution methods like LIME and SHAP\cite{lundberg2017unified, ribeiro2016should}, IG offers gradient-based precision and strong theoretical grounding. However, while model-agnostic methods can be applied broadly, IG requires differentiable models. Furthermore, IG's performance depends on both baseline and path quality, whereas SHAP provides a more unified view of feature interactions at the cost of computational complexity.

Integrated Gradients remains a foundational method in the XAI landscape due to its axiomatic rigor and applicability to deep models. Nonetheless, recent work has addressed its limitations through improved integration paths, robust baselines, and better approximation techniques. These advancements enhance IG's reliability and broaden its applicability across domains.

\section{Path-Weighted Integrated Gradients}\label{sec:method}

\subsection{Mathematical Formulation}

Let $F: \mathbb{R}^n \rightarrow \mathbb{R}$ denote a predictive model, and let $x \in \mathbb{R}^n$ be the input to be explained, relative to a baseline $x' \in \mathbb{R}^n$. The classical Integrated Gradients method computes the attribution for the $i$-th input feature as:

$$
\text{IG}_i(x) = (x_i - x'_i) \times \int_{\alpha=0}^{1} \frac{\partial F(x' + \alpha(x - x'))}{\partial x_i} d\alpha.
$$

We generalize this formulation by introducing a continuous, non-negative weighting function $g(\alpha)$ defined on the interval $[0, 1]$. The Path-Weighted Integrated Gradients (PWIG) attribution for the $i$-th feature is then given by:

$$
\text{PWIG}_i(x) = (x_i - x'_i) \times \int_{\alpha=0}^{1} g(\alpha) \frac{\partial F(x' + \alpha(x - x'))}{\partial x_i} d\alpha.
$$

This formulation allows for differential weighting of gradient contributions along the interpolation path. The integral can be approximated numerically via a Riemann sum:

$$
\text{PWIG}_i^{\text{approx}}(x) = (x_i - x'_i) \times \sum_{k=1}^{m} g\left(\frac{k}{m}\right) \frac{\partial F\left(x' + \frac{k}{m}(x - x')\right)}{\partial x_i} \cdot \frac{1}{m},
$$

where $m$ is the number of integration steps.

\subsection{Theoretical Properties}

The introduction of the weighting function $g(\alpha)$ influences several theoretical properties of the attribution method. Notably, PWIG preserves the Implementation Invariance property. Since PWIG depends solely on the gradients of the function $F$ along the path from baseline to input, it yields identical attributions for functionally equivalent models.

In addition, PWIG satisfies the Sensitivity(b) axiom. When a particular feature $x_k$ has no effect on the model's output, the partial derivative $\partial F / \partial x_k$ is zero at every point along the path. As a result, the integrated gradient for that feature is also zero, ensuring that PWIG correctly assigns no attribution to irrelevant features.

PWIG also maintains the Symmetry-preserving property. If the function is symmetric with respect to a pair of features, then they receive identical attributions when their values are identical in both the input and the baseline.
This follows directly from the equality of their partial derivatives along the symmetric path. Formal proofs of the Implementation Invariance, Sensitivity(b), and Symmetry-preserving properties for PWIG are provided in the Appendix.

However, the Completeness property, which requires that the sum of all attributions equals the difference in model output between input and baseline, does not hold for PWIG. Completeness is satisfied only in the special case where $g(\alpha) = 1$ uniformly over the interval $[0, 1]$. Any deviation from this constant function introduces a discrepancy between the integrated gradients and the total output difference, thereby violating Completeness.

\subsection{Interpretability Benefits}

Path-Weighted Integrated Gradients offer several interpretability advantages beyond those available through the standard IG method. One key benefit is the ability to perform targeted attribution. By modulating the weighting function $g(\alpha)$, users can prioritize gradients near the input or the baseline depending on their analytical objectives. Emphasizing gradients closer to the input allows the user to highlight final-stage contributions, while focusing on earlier points reveals how the model begins forming its prediction from a neutral starting point.

Another important benefit of PWIG is its capacity for noise mitigation. In practice, the interpolation path between a baseline and a complex input may pass through regions of the input space where the model exhibits erratic or unstable behavior. Such regions often arise from unrealistic combinations of features. By assigning lower weights to these intermediate regions via an appropriate $g(\alpha)$, PWIG helps reduce the influence of noisy gradients, leading to clearer and more stable attribution results.

Finally, PWIG is particularly valuable in applications involving temporal or contextual dynamics, such as time-series forecasting or sequential data modeling. In these settings, feature importance may evolve significantly along the path from baseline to input. The use of a non-uniform weighting function enables the discovery of these dynamic patterns, allowing PWIG to capture more nuanced, context-dependent attributions than traditional static methods.

In summary, the Path-Weighted Integrated Gradients method extends the classical IG framework to allow flexible and context-sensitive attributions. It retains several foundational properties of IG while enabling deeper insights into model behavior through its dynamic and interpretable gradient weighting mechanism.

\section{Experiments}~\label{sec:experiments}

\subsection{Dataset and Preprocessing}

For our experimental validation, we utilized the Open Access Series of Imaging Studies (OASIS-1) dataset \cite{oasis2007}. It is a widely used benchmark in dementia classification research. The dataset contains a collection of T1-weighted MRI scans of the brain, categorized into four diagnostic labels: Non-demented, Very Mild Dementia, Mild Dementia, and Moderate Dementia. For consistency and to accommodate the input requirements of our deep learning architecture, all scans were resized to $224 \times 224$ pixels and normalized using standard ImageNet statistics. 

\subsection{Model Architecture and Training}

We designed a high-performance convolutional neural network (CNN) using the PyTorch framework to perform multi-class classification on the OASIS-1 dataset. The model consists of four convolutional blocks, each comprising a convolutional layer (with filter sizes increasing progressively from 32 to 256), followed by a ReLU activation and max-pooling layer. To improve generalization and mitigate overfitting, a dropout layer with a rate of 0.5 is applied after the final convolutional block. The output is then flattened and passed through three fully connected layers to predict the four target classes. The model achieved an accuracy of 99.80\% on the validation set providing a reliable platform for testing PWIG.

\subsection{Attribution via Path-Weighted Integrated Gradients}

In our implementation of PWIG, we employed an exponential weighting function of the form $g(\alpha) = e^{c\alpha}$ with $c=1.0$, which places greater emphasis on gradients closer to the input. The attribution integral was discretized using 50 steps, and the final attribution maps were computed by aggregating and normalizing the weighted gradients.
To enhance interpretability and reduce the influence of outlier values, attribution scores were clipped at the 60th and 95th percentiles. This filtering reduces visual noise while preserving significant regions of interest. The resulting saliency maps highlight the regions in the MRI scan that most influenced the model’s classification decision. The full details of the implementation are available on GitHub.

\subsection{Qualitative Results}

Figure~\ref{fig:oasis_attributions} presents attribution maps generated using PWIG for a selection of MRI scans across all four diagnostic categories. The highlighted regions (in green) correspond to the areas that contributed most significantly to the model’s decision. Notably, we observe consistent emphasis on anatomical structures that align with clinical markers of dementia, including cortical and subcortical regions. For instance, in the moderate and mild dementia examples, the model focuses on regions exhibiting visible atrophy or abnormal morphology. In contrast, attribution maps for non-demented subjects appear more diffuse and less concentrated, reflecting the model's detection of normal anatomical patterns.

The results in Figure~\ref{fig:oasis_attributions} demonstrate that PWIG can effectively produce interpretable and clinically meaningful explanations for model predictions in medical imaging tasks.

\begin{figure*}[t]
    \centering
    \begin{subfigure}{0.19\textwidth}
        \includegraphics[width=\linewidth]{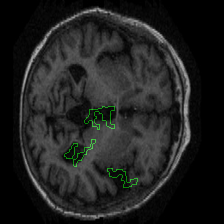}
        \caption{Mild 1}
    \end{subfigure}
    \begin{subfigure}{0.19\textwidth}
        \includegraphics[width=\linewidth]{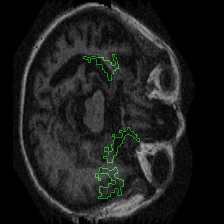}
        \caption{Mild 2}
    \end{subfigure}
    \begin{subfigure}{0.19\textwidth}
        \includegraphics[width=\linewidth]{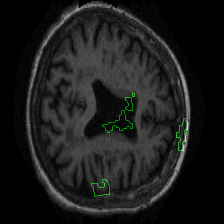}
        \caption{Mild 3}
    \end{subfigure}
    \begin{subfigure}{0.19\textwidth}
        \includegraphics[width=\linewidth]{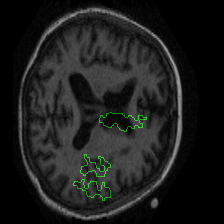}
        \caption{Moderate 1}
    \end{subfigure}
    \begin{subfigure}{0.19\textwidth}
        \includegraphics[width=\linewidth]{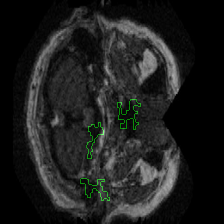}
        \caption{Moderate 2}
    \end{subfigure}
    
    \vspace{0.5em}
    
    \begin{subfigure}{0.19\textwidth}
        \includegraphics[width=\linewidth]{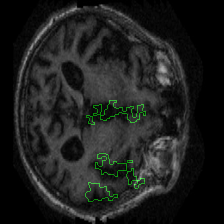}
        \caption{Moderate 3}
    \end{subfigure}
    \begin{subfigure}{0.19\textwidth}
        \includegraphics[width=\linewidth]{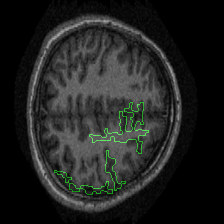}
        \caption{Non-demented 1}
    \end{subfigure}
    \begin{subfigure}{0.19\textwidth}
        \includegraphics[width=\linewidth]{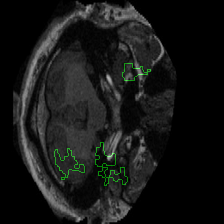}
        \caption{Non-demented 2}
    \end{subfigure}
    \begin{subfigure}{0.19\textwidth}
        \includegraphics[width=\linewidth]{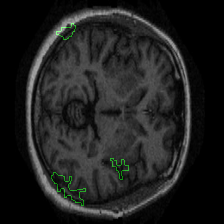}
        \caption{Very mild 1}
    \end{subfigure}
    \begin{subfigure}{0.19\textwidth}
        \includegraphics[width=\linewidth]{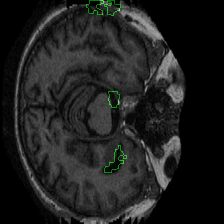}
        \caption{Very mild 2}
    \end{subfigure}
    
    \caption{PWIG attribution maps across dementia categories using the OASIS-1 dataset. Green overlays indicate regions with the highest attribution scores.}
    \label{fig:oasis_attributions}
\end{figure*}

\section{Conclusion}~\label{sec:conclusion}

In this work, we presented Path-Weighted Integrated Gradients (PWIG), a novel extension of the Integrated Gradients framework that introduces a weighting function to modulate the contribution of gradients along the interpolation path. PWIG retains key theoretical properties such as sensitivity (b) and implementation invariance, while enabling enhanced interpretability through dynamic gradient weighting. This flexibility allows users to highlight specific regions along the baseline-to-input trajectory and uncover subtle patterns of model reasoning.

The effectiveness of PWIG was valiadted through a case study in dementia classification using the OASIS-1 MRI dataset. It was shown that PWIG attribution maps successfully identified brain regions that align with clinical expectations across different dementia stages. The results demonstrate the practical value of PWIG in medical imaging applications where transparency and reliability are essential. Future work may explore the integration of manifold-constrained paths, adaptive weighting strategies, and applications to other domains such as time series and natural language processing. PWIG represents a promising step toward more expressive and informative explanations in deep learning.

\section*{Appendix}

\begin{theorem}[Implementation Invariance]
Let $F_1, F_2: \mathbb{R}^n \rightarrow \mathbb{R}$ be two network functions that are functionally equivalent, i.e., $F_1(x) = F_2(x)$ for all $x \in \mathbb{R}^n$. The Path-Weighted Integrated Gradients (PWIG) attribution for $F_1$ is identical to that for $F_2$.
\end{theorem}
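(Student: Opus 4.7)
The plan is to reduce the statement to a pointwise identity between the two integrands. PWIG depends on the underlying network only through the map $\alpha \mapsto \partial F(x' + \alpha(x-x'))/\partial x_i$; the multiplicative factor $(x_i - x'_i)$, the weight $g(\alpha)$, the baseline $x'$, and the interpolation path are all network-independent. Hence it suffices to show that $F_1$ and $F_2$ produce identical partial derivatives at every point on the linear segment from $x'$ to $x$.

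First I would set $H := F_1 - F_2$. Functional equivalence forces $H \equiv 0$ on $\mathbb{R}^n$, so at any point where both $F_1$ and $F_2$ are differentiable, $\partial H/\partial x_i = 0$, which yields $\partial F_1/\partial x_i = \partial F_2/\partial x_i$ coordinate-wise. Specializing this equality to the path $\gamma(\alpha) = x' + \alpha(x-x')$ makes the integrands $g(\alpha)\,\partial F_1(\gamma(\alpha))/\partial x_i$ and $g(\alpha)\,\partial F_2(\gamma(\alpha))/\partial x_i$ coincide for every $\alpha \in [0,1]$ at which the gradients are defined. Integrating and multiplying by the common factor $(x_i - x'_i)$ gives $\text{PWIG}_i(x)$ for $F_1$ equal to $\text{PWIG}_i(x)$ for $F_2$, as required.

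The main obstacle is not substantive but regularity-theoretic. Strictly, functional equivalence gives pointwise equality of \emph{values}, whereas the argument requires equality of \emph{gradients}, which holds only at differentiability points of both networks. For $C^1$ networks this is immediate. For piecewise-differentiable networks (e.g., ReLU-based architectures), the non-differentiable locus is a measure-zero subset of $\mathbb{R}^n$ and, under the usual genericity assumption on $(x,x')$, intersects $\gamma$ in a set of measure zero in $[0,1]$, so the Lebesgue integral is unaffected. I would handle this with a one-sentence remark noting that the same regularity assumption is implicit in the original implementation-invariance result of Sundararajan et al., and that the PWIG argument inherits it unchanged because the only modification to the classical integrand is the benign, network-independent factor $g(\alpha)$.
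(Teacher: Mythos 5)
Your proposal is correct and follows essentially the same route as the paper's proof: both reduce the claim to the observation that PWIG depends on the network only through its partial derivatives along the path, and that functional equivalence forces those derivatives to coincide. Your additional remark on the regularity caveat for piecewise-differentiable networks is a sensible refinement the paper omits, but it does not change the underlying argument.
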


\begin{proof}
The PWIG attribution for the $k^{th}$ feature of a function $F$ is defined as:
$$\text{PWIG}_k(x) := (x_k - x'_k) \int_{\alpha=0}^{1} g(\alpha) \frac{\partial F(x' + \alpha(x - x'))}{\partial x_k} d\alpha$$
An attribution method satisfies implementation invariance if its output depends only on the mathematical function being computed and not its specific implementation. Since $F_1(x) = F_2(x)$ for all $x$, their partial derivatives are also identical, i.e., $\frac{\partial F_1(x)}{\partial x_k} = \frac{\partial F_2(x)}{\partial x_k}$ for all $k \in \{1, ..., n\}$. The PWIG formula depends only on the input $x$, the baseline $x'$, the path between them, the weight function $g(\alpha)$, and the partial derivatives of the function $F$. As all these components are identical for $F_1$ and $F_2$, their PWIG attributions must be identical.
\end{proof}

\begin{theorem}[Linearity]
Let $F_1, F_2: \mathbb{R}^n \rightarrow \mathbb{R}$ be two network functions. Let $F = aF_1 + bF_2$ for some scalars $a, b \in \mathbb{R}$. Then the PWIG attribution for $F$ is the weighted sum of the attributions for $F_1$ and $F_2$.
\end{theorem}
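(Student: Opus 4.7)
The plan is to reduce the claim to two elementary facts: linearity of the partial derivative operator $\partial/\partial x_k$ and linearity of the definite integral in $\alpha$. Because the PWIG formula cleanly factors a path-independent prefactor $(x_k - x_k')$, a weight $g(\alpha)$ that does not depend on $F$, and a gradient of $F$ evaluated along the interpolation path, the dependence on $F$ is funneled entirely through $\partial F/\partial x_k$. Establishing linearity of PWIG thus amounts to pushing a linear combination through this single occurrence.

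First I would write the PWIG attribution for $F = aF_1 + bF_2$ directly from the definition in Section~\ref{sec:method}, keeping the prefactor $(x_k - x_k')$ outside the integral. Next I would invoke linearity of partial differentiation at the point $y_\alpha := x' + \alpha(x-x')$ on the straight-line path, giving
$$\frac{\partial F(y_\alpha)}{\partial x_k} = a\,\frac{\partial F_1(y_\alpha)}{\partial x_k} + b\,\frac{\partial F_2(y_\alpha)}{\partial x_k}.$$
I would then substitute this identity into the integrand, multiply by the scalar weight $g(\alpha)$, and use linearity of the integral together with the fact that $a$ and $b$ are constants (independent of $\alpha$) to split the single integral into two. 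Reassembling the prefactor $(x_k - x_k')$ with each piece then matches the PWIG definition applied separately to $F_1$ and $F_2$, yielding
$$\text{PWIG}_k^{F}(x) = a\,\text{PWIG}_k^{F_1}(x) + b\,\text{PWIG}_k^{F_2}(x),$$
which is the claimed identity for each coordinate $k$.

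The main obstacle is essentially cosmetic rather than mathematical: there is no geometric or combinatorial content beyond linearity of differentiation and integration. The only point requiring care is a minor regularity remark, namely that $F_1$ and $F_2$ must be sufficiently smooth along the path for the gradients to exist and for $g(\alpha)\,\partial F_j(y_\alpha)/\partial x_k$ to be integrable on $[0,1]$; these are precisely the implicit conditions already needed for PWIG to be well-defined, so no new assumption is introduced. I would mention this briefly to keep the argument self-contained, but otherwise the proof is a direct two-line computation.
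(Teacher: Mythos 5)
Your proposal is correct and follows essentially the same route as the paper's proof: both push the linear combination through the partial derivative, substitute into the integrand, and split the integral using linearity. The only addition is your brief regularity remark, which the paper omits but which does not change the argument.
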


\begin{proof}
The linearity axiom requires that $\text{PWIG}_k(aF_1 + bF_2) = a \cdot \text{PWIG}_k(F_1) + b \cdot \text{PWIG}_k(F_2)$ for any feature $k$. 
The partial derivative is a linear operator, so:
$$\frac{\partial F}{\partial x_k} = \frac{\partial}{\partial x_k}(aF_1 + bF_2) = a\frac{\partial F_1}{\partial x_k} + b\frac{\partial F_2}{\partial x_k}$$
Substituting this into the PWIG equation:
$$\text{PWIG}_k(F) = (x_k - x'_k) \int_{\alpha=0}^{1} g(\alpha) \left( a\frac{\partial F_1}{\partial x_k} + b\frac{\partial F_2}{\partial x_k} \right) d\alpha$$
Due to the linearity of the integral:
\begin{align*}
    \text{PWIG}_k(F) = a \left( (x_k - x'_k) \int_0^1 g(\alpha)\frac{\partial F_1}{\partial x_k}d\alpha \right) \\
+ b \left( (x_k - x'_k) \int_0^1 g(\alpha)\frac{\partial F_2}{\partial x_k}d\alpha \right)
\end{align*}
This simplifies to:
$$\text{PWIG}_k(F) = a \cdot \text{PWIG}_k(F_1) + b \cdot \text{PWIG}_k(F_2)$$
\end{proof}

\begin{theorem}[Dummy Axiom/Sensitivity (b)]
Let $F: \mathbb{R}^n \rightarrow \mathbb{R}$ be a function that does not mathematically depend on a variable $x_k$ for some $k \in \{1, ..., n\}$. Then the PWIG attribution for $x_k$ is always zero. 
\end{theorem}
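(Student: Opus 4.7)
The plan is to argue directly from the definition that the integrand in the PWIG formula vanishes identically along the interpolation path, so the entire attribution collapses to zero regardless of the weighting function $g$. The argument parallels the corresponding proof for classical IG, and in fact the weighting function plays no obstructive role because we will end up integrating the zero function.

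First, I would formalize the hypothesis. Saying that $F$ does not mathematically depend on $x_k$ means there exists $\tilde F:\mathbb{R}^{n-1}\to\mathbb{R}$ such that $F(x_1,\dots,x_n)=\tilde F(x_1,\dots,x_{k-1},x_{k+1},\dots,x_n)$ for every $x\in\mathbb{R}^n$. From this representation it follows by a standard one-variable calculus argument (holding every other coordinate fixed, the map $t\mapsto F(\dots,t,\dots)$ is constant) that $\partial F/\partial x_k(y)=0$ for every $y\in\mathbb{R}^n$.

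Second, I would substitute this pointwise vanishing into the PWIG integrand. For each $\alpha\in[0,1]$, the interpolation point $y_\alpha := x' + \alpha(x-x')$ lies in $\mathbb{R}^n$, so $\partial F(y_\alpha)/\partial x_k = 0$. Hence $g(\alpha)\,\partial F(y_\alpha)/\partial x_k = 0$ for every $\alpha\in[0,1]$, independently of the shape of $g$. Integrating the zero function over $[0,1]$ gives zero, and multiplying by the scalar $(x_k - x'_k)$ preserves this, yielding $\text{PWIG}_k(x)=0$.

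The only point that could be called an obstacle is the very first step, i.e.\ justifying that a model which does not depend on $x_k$ has identically zero partial derivative with respect to $x_k$. This is conceptually trivial but worth stating explicitly, because it is the place where the assumption is actually used; once it is in hand, the rest of the proof is a direct computation. I would also remark, for context, that the non-negativity and integrability of $g$ are not needed for this axiom: the conclusion holds for any measurable weighting function, which distinguishes Sensitivity(b) from Completeness in the analysis of PWIG.
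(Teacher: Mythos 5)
Your proposal is correct and follows essentially the same route as the paper's proof: the hypothesis forces $\partial F/\partial x_k$ to vanish identically, so the integrand is zero along the entire path and the attribution collapses to zero regardless of $g$. Your extra step of formalizing ``does not depend on $x_k$'' via a function $\tilde F$ of the remaining variables is a slightly more careful justification of the vanishing derivative than the paper gives, but it does not change the argument.
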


\begin{proof}
The dummy axiom states that if a function does not depend on a variable, the attribution to that variable must be zero. If $F$ does not depend on $x_k$, its partial derivative with respect to $x_k$ is identically zero for all inputs:
$$\frac{\partial F(x)}{\partial x_k} = 0 \quad \forall x \in \mathbb{R}^n$$
The PWIG attribution for $x_k$ is:
$$\text{PWIG}_k(x) = (x_k - x'_k) \int_{\alpha=0}^{1} g(\alpha) \frac{\partial F(x' + \alpha(x - x'))}{\partial x_k} d\alpha$$
Since the term $\frac{\partial F}{\partial x_k}$ is zero everywhere along the integration path, the integral evaluates to zero. Therefore:
$$\text{PWIG}_k(x) = (x_k - x'_k) \times 0 = 0$$
\end{proof}

\begin{theorem}[Symmetry-Preserving]
Let variables $x_i$ and $x_j$ be symmetric with respect to a function $F$. For any input $x$ where $x_i = x_j$ and any baseline $x'$ where $x'_i = x'_j$, the PWIG attributions for $x_i$ and $x_j$ are identical, i.e., $\text{PWIG}_i(x) = \text{PWIG}_j(x)$.
\end{theorem}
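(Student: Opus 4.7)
The plan is to reduce the claim to a pointwise equality of partial derivatives along the interpolation path, and then invoke the symmetry of $F$ at points whose $i$-th and $j$-th coordinates coincide. First I would write out $\text{PWIG}_i(x)$ and $\text{PWIG}_j(x)$ from the definition and factor out the prefactors $(x_i - x'_i)$ and $(x_j - x'_j)$; by hypothesis $x_i = x_j$ and $x'_i = x'_j$, so these prefactors agree. Consequently it suffices to prove that the weighted integrals of the two partial derivatives are equal, and since $g(\alpha)$ is the same in both expressions, it is enough to show
$$\frac{\partial F}{\partial x_i}\bigl(x' + \alpha(x-x')\bigr) = \frac{\partial F}{\partial x_j}\bigl(x' + \alpha(x-x')\bigr)$$
for every $\alpha \in [0,1]$.

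Next I would unpack what symmetry means. Following the convention in the IG literature, $x_i$ and $x_j$ are symmetric with respect to $F$ if $F$ is invariant under the transposition $\sigma_{ij}$ that swaps the $i$-th and $j$-th coordinates, i.e., $F(y) = F(\sigma_{ij} y)$ for all $y \in \mathbb{R}^n$. Differentiating both sides with respect to $y_i$ and applying the chain rule gives the identity
$$\frac{\partial F}{\partial y_i}(y) = \frac{\partial F}{\partial y_j}(\sigma_{ij} y),$$
valid for every $y$. Whenever the argument satisfies $y_i = y_j$ we have $\sigma_{ij} y = y$, so the right-hand side collapses to $\partial F/\partial y_j (y)$, yielding $\partial F/\partial y_i(y) = \partial F/\partial y_j(y)$ at such points.

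I would then verify that the interpolated point $y(\alpha) := x' + \alpha(x-x')$ has exactly this property: its $i$-th coordinate is $x'_i + \alpha(x_i - x'_i)$ and its $j$-th coordinate is $x'_j + \alpha(x_j - x'_j)$, and these are equal because $x_i = x_j$ and $x'_i = x'_j$. Hence the pointwise equality of partial derivatives holds along the entire integration path, so the two weighted integrals coincide, and combined with the equal prefactors we obtain $\text{PWIG}_i(x) = \text{PWIG}_j(x)$.

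The main obstacle is mostly bookkeeping rather than a genuine difficulty: one must state the symmetry hypothesis precisely (as invariance under a coordinate transposition) so that the chain-rule step is unambiguous, and one must be careful to evaluate the differentiated identity only at points fixed by $\sigma_{ij}$, since the identity $\partial F/\partial y_i(y) = \partial F/\partial y_j(y)$ fails in general off the diagonal $y_i = y_j$. Once that observation is in place, the rest of the argument is immediate from the linearity of the integral.
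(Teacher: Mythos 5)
Your proof is correct and follows essentially the same route as the paper's: equate the prefactors $(x_i - x'_i) = (x_j - x'_j)$, observe that the interpolated point has equal $i$-th and $j$-th coordinates for every $\alpha$, and conclude that the partial derivatives (hence the weighted integrands) coincide along the path. The only difference is that you explicitly derive the key identity $\partial F/\partial y_i(y) = \partial F/\partial y_j(y)$ on the diagonal $y_i = y_j$ from the transposition-invariance $F(y) = F(\sigma_{ij}y)$ via the chain rule, whereas the paper simply asserts this property of symmetric variables; your version is the more rigorous of the two.
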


\begin{proof}
An attribution method is symmetry-preserving if symmetric variables receive identical attributions when their values are identical in both the input and the baseline.
Let the path from $x'$ to $x$ be $\gamma(\alpha) = x' + \alpha(x - x')$. Given that $x_i = x_j$ and $x'_i = x'_j$, it follows that the path components are equal for all $\alpha \in [0, 1]$:
$$\gamma_i(\alpha) = x'_i + \alpha(x_i - x'_i) = x'_j + \alpha(x_j - x'_j) = \gamma_j(\alpha)$$
Due to the symmetry of the function $F$ with respect to variables $i$ and $j$, their partial derivatives are equal at any point where the values for those variables are equal. Since $\gamma_i(\alpha) = \gamma_j(\alpha)$ along the entire path, we have:
$$\frac{\partial F(\gamma(\alpha))}{\partial x_i} = \frac{\partial F(\gamma(\alpha))}{\partial x_j}$$
The PWIG attributions for $x_i$ and $x_j$ are:
$$\text{PWIG}_i(x) = (x_i - x'_i) \int_0^1 g(\alpha) \frac{\partial F(\gamma(\alpha))}{\partial x_i} d\alpha$$
$$\text{PWIG}_j(x) = (x_j - x'_j) \int_0^1 g(\alpha) \frac{\partial F(\gamma(\alpha))}{\partial x_j} d\alpha$$
Since $(x_i - x'_i) = (x_j - x'_j)$ and the integrands are identical, it follows that $\text{PWIG}_i(x) = \text{PWIG}_j(x)$.
\end{proof}


\begin{thebibliography}{99}
\bibitem{Sadeghi}
Sadeghi, Z., Alizadehsani, R., Cifci, M. A., Kausar, S., Rehman, R., Mahanta, P., ... \& Pardalos, P. M. (2024). A review of Explainable Artificial Intelligence in healthcare. Computers and Electrical Engineering, 118, 109370.

\bibitem{sundararajan2017axiomatic} Sundararajan, M., Taly, A., \& Yan, Q. (2017). Axiomatic attribution for deep networks. In \textit{International Conference on Machine Learning (ICML)}, 3319--3328.

\bibitem{oasis2007} 
Marcus, D. S., Wang, T. H., Parker, J., Csernansky, J. G., Morris, J. C., \& Buckner, R. L. (2007). Open Access Series of Imaging Studies (OASIS): cross-sectional MRI data in young, middle aged, nondemented, and demented older adults. Journal of cognitive neuroscience, 19(9), 1498-1507.

\bibitem{costanza2025riemannian} Costanza, F., \& Simpson, L. (2025). Riemannian Integrated Gradients: A Geometric View of Explainable AI. \textit{arXiv preprint arXiv:2503.00892}.

\bibitem{zaher2024manifold} Zaher, E., Trzaskowski, M., Nguyen, Q., \& Roosta, F. (2024). Manifold Integrated Gradients: Riemannian Geometry for Feature Attribution. In \textit{Proceedings of the International Conference on Machine Learning (ICML)}.

\bibitem{kapishnikov2021guided} Kapishnikov, A., et al. (2021). Guided Integrated Gradients: An Adaptive Path Method for Removing Noise From Explanations. In \textit{CVPR}, 1269--1278.

\bibitem{kamalov2025} Kamalov, F., Choutri, S. E., \& Atiya, A. F. (2025). Analytical formulation of synthetic minority oversampling technique (SMOTE) for imbalanced learning. Gulf Journal of Mathematics, 19(1), 400-415.

\bibitem{erion2021improving} Erion, G., et al. (2021). Improving performance of deep learning models with axiomatic attribution priors and expected gradients. \textit{Nature Machine Intelligence}, 3(7), 620--631.

\bibitem{jha2024integrated} Jha, D., et al. (2024). Integrated Decision Gradients: Compute Your Attributions Where the Model Makes Its Decision. In \textit{AAAI}, 5289--5297.

\bibitem{xu2020attribution} Xu, S., Venugopalan, S., \& Sundararajan, M. (2020). Attribution in Scale and Space. In \textit{CVPR}, 9680--9689.

\bibitem{swain2024riemann} Swain, S., \& Singhi, S. (2024). Riemann Sum Optimization for Accurate Integrated Gradients Computation. In \textit{NeurIPS}.

\bibitem{zhuo2023ig2} Zhuo, Y., \& Ge, Z. (2023). IG2: Integrated Gradient on Iterative Gradient Path for Explainable AI. \textit{IEEE Transactions on Neural Networks and Learning Systems}.

\bibitem{simpson2025tangentially} Simpson, L., et al. (2025). Tangentially Aligned Integrated Gradients for User-Friendly Explanations. \textit{arXiv preprint arXiv:2503.08240}.

\bibitem{yadav2025exploring} Yadav, A., \& Singh, A. (2025). Exploring Pre-Trained Models for Skin Cancer Classification. \textit{Journal of Explainable Artificial Intelligence}, 8(2), 35.

\bibitem{enguehard2023sequential} Enguehard, R. (2023). Sequential Integrated Gradients: A Simple but Effective Method for Explaining Language Models. In \textit{ACL-IJCNLP}, 865--878.

\bibitem{sikdar2021integrated} Sikdar, S., et al. (2021). Integrated Directional Gradients: Feature Interaction Attribution for Neural NLP Models. In \textit{ACL-IJCNLP}, 865--878.

\bibitem{wang2024improving} Wang, Y., Li, S., \& Pan, Y. (2024). Improving Integrated Gradient-based Transferable Adversarial Examples by Refining the Integration Path. In \textit{AAAI}, 19999--20007.

\bibitem{lundberg2017unified} Lundberg, S. M., \& Lee, S. I. (2017). A Unified Approach to Interpreting Model Predictions. In \textit{NeurIPS}, 4765--4774.

\bibitem{ribeiro2016should} Ribeiro, M. T., Singh, S., \& Guestrin, C. (2016). Why Should I Trust You?: Explaining the Predictions of Any Classifier. In \textit{KDD}, 1135--1144.
\end{thebibliography}
\end{document}